\algnewcommand{\Inputs}[1]{%
  \State \textbf{Inputs:}
  \Statex \hspace*{\algorithmicindent}\parbox[t]{.8\linewidth}{\raggedright #1}
}
\algnewcommand{\Initialize}[1]{%
  \State \textbf{Initialize:}
  \Statex \hspace*{\algorithmicindent}\parbox[t]{.8\linewidth}{\raggedright #1}
}
\newcommand{\R}{\mathbb{R}}	
\newcommand{\E}{\mathbb{E}}
\newtheorem{thm}{Theorem}[section]
\newtheorem{prop}[thm]{Proposition}
\theoremstyle{definition}
\newtheorem{defn}{Definition}[section]
\title{From Local to Global: Spectral-Inspired Graph Neural Networks}
\author{%
Ningyuan (Teresa) Huang $^{1}$ \quad Soledad Villar$^{1}$ \quad Carey E. Priebe$^1$ \\
\textbf{\quad Da Zheng$^2$} \quad \textbf{Chengyue Huang}$^3$ \quad \textbf{Lin Yang}$^4$ \quad \textbf{Vladimir Braverman}$^5$ \\
$^1$Johns Hopkins University \quad $^2$Amazon \quad $^3$Renmin University of China \\
$^4$University of California, Los Angeles \quad $^5$Rice University
}
\begin{document}

\maketitle
%   The abstract paragraph should be indented \nicefrac{1}{2}~inch (3~picas) on
%   both the left- and right-hand margins. Use 10~point type, with a vertical
%   spacing (leading) of 11~points.  The word \textbf{Abstract} must be centered,
%   bold, and in point size 12. Two line spaces precede the abstract. The abstract
%   must be limited to one paragraph.

\begin{abstract}
Graph Neural Networks (GNNs) are powerful deep learning methods for Non-Euclidean data. Popular GNNs are message-passing algorithms (MPNNs) that aggregate and combine signals in a local graph neighborhood. However, shallow MPNNs tend to miss long-range signals and perform poorly on some heterophilous graphs, while deep MPNNs can suffer from issues like over-smoothing or over-squashing. To mitigate such issues, existing works typically borrow normalization techniques from training neural networks on Euclidean data or modify the graph structures. Yet these approaches are not well-understood theoretically and could increase the overall computational complexity. In this work, we draw inspirations from spectral graph embedding and propose \texttt{PowerEmbed} --- a simple layer-wise normalization technique to boost MPNNs. We show \texttt{PowerEmbed} can provably express the top-$k$ leading eigenvectors of the graph operator, which prevents over-smoothing and is agnostic to the graph topology; meanwhile, it produces a list of representations ranging from local features to global signals, which avoids over-squashing. We apply \texttt{PowerEmbed} in a wide range of simulated and real graphs and demonstrate its competitive performance, particularly for heterophilous graphs.
\end{abstract}

\section{Introduction}

We consider the graph representation learning problem. Given the graph $G$ of $n$ nodes with adjacency matrix $A \in \R^{n \times n}$ and node features $X \in \R^{n \times p}$, a function $f$ embeds the graph $G$ by mapping $(A,X)$ to $\mathbf{h} \in \R^{n \times k}$, where each node is associated with an embedding vector $h_i \in \R^k$. The task is to find a permutation equivariant function $f$ (i.e. $f(\Pi A \Pi^{\top}, \Pi X) = \Pi f(A, X)$ for all  permutation matrices $\Pi$), while preserving the graph structure: similar nodes in $G$ should be close to each other in the embedding space $\R^k$. The quality of the embedding $\mathbf{h}$ is typically evaluated in the subsequent statistical inference task, such as node classification or link prediction.
%Then the node embeddings can be used for subsequent inference without storing the original graph in memory. 

Spectral embeddings choose $f$ based on matrix factorization. For example, $f$ could output the top-$k$ eigenvectors of $A$. 
Spectral embeddings are well-established statistical methods that enjoy nice theoretical properties on random graphs, e.g. stochastic block models \cite{Lei_2015}, latent position random graphs \cite{athreya2017statistical}. Nonetheless, spectral methods typically employ a global spectral decomposition, which can be computationally prohibitive on large-scale graphs; they are also brittle on sparse graphs due to the bias in graph spectra estimation \cite{abbe2017community},

Graph neural networks (GNNs) parameterize $f$ as a neural network, and typically optimize $f$ with label supervision from the subsequent inference task. Popular GNNs are message-passing algorithms (MPNNs) that aggregate and combine signals in a local graph neighborhood. Despite their computational efficiency and scalability, recent works have shown that shallow MPNNs tend to miss long-range signals or global information \citep{alon2021on}, while deep MPNNs can suffer from issues like over-smoothing (i.e., losing information for the inference task) \citep{li2018deeper, oono2019graph} or over-squashing (i.e. distortion of information from distant nodes) \citep{alon2021on, topping2021understanding}. Moreover, MPNNs seem to perform poorly on several \textit{heterophilous} graphs (i.e., nodes from different classes are more likely to connect, as opposed \textit{homophilous} graphs) \cite{Pei2020Geom-GCN,zhu2020beyond, ma2022is}. To mitigate such issues, existing MPNNs typically borrow optimization techniques from training standard neural networks \citep{li2019deepgcns, cai2021graphnorm} or modify the graph structures \citep{velic2018graph, xu2018jump} with increasingly complicated architectures \citep{ying2021transformers, kreuzer2021rethinking}. Nevertheless, these approaches do not fully exploit the nature of the graph learning problem. To encode the right kind of inductive bias, there are two major classes of theoretically-driven GNNs: (1) higher-order GNNs that are based on the Weisfeiler-Lehman hierarchy \cite{HuangWL2021, sato2020survey, xu2018powerful, morris2019weisfeiler, morris2020weisfeiler, bodnar2021weisfeiler, cotta2021reconstruction, chen2019equivalence, chen2020can, chris2022wl}; (2) spectral GNNs that are motivated from graph Fourier transform \citep{bruna2013spectral, defferrard2016convolutional, perlmutter2019understanding, liao2019lanczosnet, ruiz2021}. Despite their nice theoretical properties, they typically incur higher computational costs (e.g., due to operations on higher-order tensors in higher-order GNNs, or additional computations to parameterize the spectral filter functions in spectral GNNs). %from using higher-order tensors. While spectral GNNs that learn rational functions of the graph operators can be implemented on the spatial domain, they typically require additional computations to parameterize the graph spectral filter functions.

\subsection{Our Contributions} \label{sec:contribution}
Inspired by spectral embeddings, we propose a simple normalization technique in MPNNs to encode global spectra information. Specifically:

\begin{enumerate}
    \item We propose an unsupervised representation learning method \texttt{PowerEmbed} (Algorithm \ref{alg:power_gnn}) by augmenting the message-passing layer with a simple normalization step; \texttt{PowerEmbed} can express the top-$k$ eigenvectors of the graph operator, which is agnostic to the graph topology and works well for graphs with homophily or heterophily.
    \item We couple \texttt{PowerEmbed} with an inception network (Algorithm \ref{alg:sign}) to learn the rich representations that interpolate from local message-passing features to global spectral information, which provably avoids over-smoothing and over-squashing.
    \item We demonstrate numerically that our simple techniques achieve competitive performance for node classification in a wide range of simulated and real-world graphs.
\end{enumerate}

\section{Related Work}

%Popular MPNNs are typically shallow \cite{kipf2016semi, velivckovic2017graph, wu2019simplifying}. Empirically, it has been observed that as the number of layers increases, the performance of MPNN degrades \cite{pairnorm, alon2021on}. Nonetheless, certain graph tasks require long-range information, which motivates the design of deeper MPNNs to capture signals from distant nodes. The bottleneck of designing deeper MPNNs has been attributed to multiple causes, such as over-smoothing \cite{li2018deeper, oono2019graph}, over-squashing \cite{alon2021on, topping2021understanding}, vanishing gradients \cite{li2019deepgcns}, etc. 

\textbf{Normalization techniques in MPNNs.} Many normalization techniques have been proposed to design deeper MPNNs: \cite{pairnorm, yang2020revisiting} proposed a layer-wise mean subtraction and rescaling to maintain pairwise node embedding distances; \cite{zhou2020towards} proposed a layer-wise node normalization depending on their (predicted) clusters; \cite{cai2021graphnorm, chen2022learning} adapted normalization techniques from deep learning and propose node-wise, batch-wise, and graph-wise normalization methods.

%  \begin{itemize}
%     \item PairNorm \cite{pairnorm}: maintain pairwise node embedding distances by (row)-mean subtraction; similar (column)-mean subtraction proposed in \cite{yang2020revisiting}  
%     \item NodeNorm \cite{zhou2021understanding}: reduce variance of node embedding by rescaling; 
%     \item Multi-scale normalization \cite{chen2022learning}: normalize at both node and graph level;
%     \item Differentiable group normalization \cite{zhou2020towards}: rescaling with dependence on the node clusters;
%     \item GraphNorm, BatchNorm, LayerNorm \cite{cai2021graphnorm}: adapt normalization tricks from convolutional neural networks (CNNs)
% \end{itemize}

\textbf{Injecting global information in MPNNs.} Global properties of the graph can be encoded as inputs to MPNNs, such as using spectral embeddings as node features \cite{kreuzer2021rethinking, Dwivedi2022learnablePE},  sampling anchor nodes \cite{You2019anchor}, or using other low-pass geometric features \cite{wenkel2022overcoming}. On the other hand, the global information may be learned by using specific architectural choices, such as residual connections \cite{li2019deepgcns, chen-plmr}, attention mechanisms \cite{xu2018jump, geniepath, liu2021non, xie2020reinceptione}, or transformers \cite{ ying2021transformers, kreuzer2021rethinking, wu2021representing}. Furthermore, to speed up the long-range information flow, the original graph can be modified, such as graph sparsification \cite{rong2020dropedge, li2020sgcn, chen2021unified},  graph sampling \cite{zeng2021deep, yoon2021performance}, or localized subgraph extraction \cite{zeng2021decoupling}. Finally, multiscale graph representation learning can be used, such as scattering transform \cite{gama2018diffusion, perlmutter2019understanding}, or hierarchical approaches \cite{mousavi2017hierarchical, ying2018hierarchical}.  

\textbf{GNNs for community detection.} Several GNNs have been proposed to tackle community detection in stochastic block models. These include semi-supervised approaches such as line-GNNs \cite{chen2017supervised}, diff-pooling \cite{ying2018hierarchical}, and GNNs combined with Markov random field models \cite{jin2019graph}. There are also unsupervised approaches such as graph-pooling \cite{bianchi20graphpool}.  

To the best of our knowledge, the statistical properties of the aforementioned MPNNs are not well-understood. Moreover, many of them incur high computational costs. This motivates our work to propose simple techniques that enjoy provable statistical guarantees and computational efficiency.

\section{Preliminaries} \label{sec:prelim}

\textbf{Notations. } Let the graph $G = (V, E, X, Y)$ with node set $V$, edge set $E$, node input features $X \in \R^{n \times p}$ and node labels $Y \in \R^{n}$, where $|V| = n$. Let $A, D$ be the adjacency matrix and the degree matrix of $G$. Let $\bar{A} = \tilde{D}^{-0.5} \tilde{A} \tilde{D}^{-0.5}$ be the symmetric graph Laplacian, where $\tilde{A} = A + I, \tilde{D} = D + I$. Let $A_{rw} = \tilde{D}^{-1} \tilde{A}$ be the random walk graph Laplacian. Let $ \frac{U}{\| U[:,k]\|}$ be the column normalized form of matrix $U$. %Let $\Omega \subset V$ indexes the training set. 

\textbf{Inference problem setup.} To evaluate the graph embedding $\mathbf{h} = f(A, X)$, we consider the node classification task: Given a graph $G$ of $n$ nodes with $n_t < n$ training labels, the goal is to predict the remaining $n - n_t$ test labels. Let $\Omega$ and $\Omega^\perp$ the indices corresponding to training nodes and test nodes respectively ($V=V_\Omega \cup V_{\Omega^\perp}$). Then the classifier is trained on the training set  $(\mathbf{h}_{\Omega}, Y_{\Omega})$ and evaluated on the test set $(\mathbf{h}_{\Omega^{\perp}}, Y_{\Omega^\perp})$. If the graph embedding is a list of features $P = [X,\,\textbf{h}^{(1)} \ldots, \textbf{h}^{(L)}]$, then we let $P_{\Omega}, P_{\Omega^\perp}$ denote the training and test set features, where
\begin{equation}
    P_{\Omega} = [X_{\Omega}, \textbf{h}^{(1)}_{\Omega}, \ldots, \textbf{h}^{(L)}_{\Omega}]. \label{eqn:ptrain}
\end{equation}

\begin{defn}[Stochastic Block Model (SBM)] \label{defn:sbm}
A graph $A$ with $n$ nodes is a random SBM graph if it is sampled as %connecting every pair of nodes independently,
\begin{equation}
A \sim Bernoulli(P), \, 
P = Z B Z^{\top}, \, 
\label{eqn:SBM}
\end{equation}
where $Z \in \R^{n \times K}$ is a membership matrix such that $Z_{i,k}$ is 1 if the $i$-th node belongs to the $k$-th class, $\| Z_{i,\cdot} \|_1 = \sum_{k=1}^K |Z_{i,k}| = 1$, and $B \in [0,1]^{K \times K}$ is a full-rank matrix representing the block connection probability. 
\end{defn}

\begin{defn}[2B-SBM with Gaussian node features] \label{den:csbm}
A two-block symmetric SBM (2B-SBM) is given by:
\begin{equation}
    Z_{i,\cdot} = \begin{cases*}
    [1,0] & if  $i \in [n/2]$  \\
    [0,1] & otherwise.
    \end{cases*}, \, \,
    B =  \begin{bmatrix}
p  & q\\
q  & p
\end{bmatrix},  \label{eqn:2B-SBM}
\end{equation}
where $p, q \in (0,1), p \ne q$. 
The node features in block $k \in \{0,1\}$ are sampled from a $m$-dimensional multivariate Gaussian $\mathcal{N}(\mu_k, \Sigma_k)$ and stored in a node feature matrix $X \in \R^{n \times m}$.
\end{defn}

The SBM model is a canonical random graph model with planted clusters, which has been widely studied in the context of community detection \cite{abbe2017community, athreya2017statistical,Lei_2015, lyzinski2014perfect}. In particular, the leading eigenvectors of the graph adjacency $A$ or the graph Laplacians $A_{rw}, \bar{A}$ provably encode community membership, provided that the graphs are sufficiently dense \cite{von2008consistency, Lei_2015}. To see this, consider the 2B-SBM model in \eqref{eqn:2B-SBM} without node features, then the matrix $P = \E(A)$ has three eigenvalues, ranked by magnitude as $(p+q)/2, (p-q)/2, 0$, where $0$ has multiplicity $n-2$. The leading eigenvector is a constant vector, whereas the second eigenvector $u_2(P) = [\mathbf{1}_{n/2}; \mathbf{-1}_{n/2}]$ reveals the community structure. Now, the random graph $A$ can be viewed as a perturbation of its expectation $P$. Standard results in concentration of measure show that the eigenvalues and eigenvectors of $A$ are close to those of $P$ for sufficiently dense graphs (the average node degree needs to be of the order $\Omega(\frac{n}{\log n})$). However, spectral methods fail to detect communities when the graphs are very sparse ($p=q=O(1/n)$), and there are known information-theoretical thresholds for certain stochastic block models (see \cite{abbe2017community} for a recent survey). 

In this work, we consider classical spectral embedding methods defined as follows: Let $A = U S U^{\top}, X X^{\top} = \tilde{U} \Sigma \tilde{U}^{\top} $ be the spectral decomposition of the graph $A$ and the covariance matrix $ X X^{\top}$. Then
\begin{equation}
    \mathbf{h}^{ASE} = U_k, \quad \mathbf{h}^{\operatorname{cov}(X)} = \tilde{U}_k , \quad \mathbf{h}^{A\_X} =  [\mathbf{h}^{ASE} ;  \mathbf{h}^{\operatorname{cov}(X)}]  \label{eqn:ASE}.
\end{equation}

Crucially, spectral methods that are based on the top-$k$ eigenvectors ranked by absolute \textit{magnitude} have no assumption on the graph being homophilous (e.g., $p>q$) or heterophilous (e.g., $p<q$); They perform consistently well on both cases by ``looking globally'' at the graph structure. In contrast, message-passing methods are local algorithms that work well in homophilous graphs, but they tend to overlook global signals and struggle to learn in heterophilous graphs.

\begin{defn}[Message-Passing Neural Network (MPNN)]
A $L$-layer MPNN initializes the embedding $\mathbf{h}^{(0)} = X$. At each iteration $l$, the embedding of node $i$ is updated as
\begin{equation}
    \mathbf{h}^{(l)}_i = \phi \left( \mathbf{h}^{(l-1)}_i,   \sum_{j \in \mathcal{N}(i)} \psi(\mathbf{h}^{(l-1)}_i, \mathbf{h}^{(l-1)}_j) \right), \label{eqn: MPNN}
\end{equation} 
where $\phi, \psi$ are the update and message functions, and $\mathcal{N}(i)$ denotes the neighbors of node $i$. 
\end{defn}

For example, graph convolutional networks (GCN \cite{kipf2016semi}) choose $\phi$ as a nonlinear activation function $\sigma$ and $\psi(i,j) = \frac{1}{ \sqrt{\operatorname{deg}(i) \operatorname{deg}(j)}} W^{(l-1)} \mathbf{h}^{(l-1)}_j $, where $W^{(l-1)}$ is a learnable weight matrix. In other words:
\begin{equation}
   \mathbf{h}^{(l)}  = \sigma ( \bar{A} \mathbf{h}^{(l-1)}  W^{(l-1)}). \label{eqn: GCN}
\end{equation}

GCN can be further simplified by removing the nonlinearity $\sigma$, which is known as simple graph convolution (SGC \cite{wu2019simplifying}):
\begin{equation}
   \mathbf{h}^{(L)}  = \bar{A} \mathbf{h}^{(L-1)}  W^{(L-1)} =  \bar{A}^{L} X \big(W^{(L-1)} \ldots W^{(0)} \big). \label{eqn: SGC}
\end{equation}
To train a classifier on top of the SGC embedding in \eqref{eqn: SGC}, we can absorb the weight matrices $W^{(l-1)}$ to the downstream classifier. In other words, SGC can be viewed as producing an unsupervised representation $\bar{A}^{L} X$ followed by learning a classifier. However, such representation fails to distinguish nodes from different classes when $L$ is sufficiently large, known as the over-smoothing issue of deep GNNs. 

\begin{defn}[Over-smoothing \cite{li2018deeper}] \label{defn:over-smoothing}
Assume the graph has one connected component. Then %over-smoothing describes a sufficiently deep SGC model produces identical embeddings for all nodes., i.e., %, and consider $\sigma$ and $W^{(l)}$ to be the identity map in \eqref{eqn: GCN}. %Recall that $A_{rw}$ and $\bar{A}$ share the same set of eigenvalues, which all fall into $(-1,1]$. Then
\begin{equation}
\lim_{L \to \infty} A_{rw}^{L} X = [\mathbf{1}_{n}; , \ldots, \mathbf{1}_{n}]; \quad \lim_{L \to \infty} \bar{A}^{L} X = D^{-0.5} [\mathbf{1}_{n}; , \ldots, \mathbf{1}_{n}]. \label{eqn:over-smoothing}    
\end{equation}
\end{defn}
Over-smoothing describes that for a sufficiently deep SGC model, its node embedding converges to the largest eigenvector which only encodes connected component and degree information, but tends to miss the community structure present in subsequent eigenvectors. It has also been extended and defined for nonlinear GCNs \cite{oono2019graph, cai2020note} and empirically observed in deep MPNNs \cite{pairnorm, rong2020dropedge, chen-plmr}. 

On the other hand, over-squashing refers to the distortion of node features in the learned graph representation in a deep GNN model \cite{alon2021on, topping2021understanding}, which can also hurt subsequent inference when the node features carry useful signals.
\begin{defn}[Over-squashing \cite{topping2021understanding}] \label{defn:over-squashing}
Let $h_i^{(L)} = h_i^{(L)}(x_1, \ldots, x_n)$ be the output for node $i$ of a $L$-layer MPNN with input features $\{ x_i\}_{i=1}^n$. Then the over-squashing effect (for node $i$ with respect to node $s$) is measured by the Jacobian $\partial h_i^{(L)} / \partial x_s$. 
\end{defn}
Thus, the smaller the Jacobian value, the more node feature information is ``squashed'' out in the embedding.

\section{PowerEmbed} \label{sec:powerembed}

%\subsection{Problem Setup}
In this section, we first show how to express the top-$k$ eigenvectors of a graph operator using message-passing power iteration (\texttt{PowerEmbed} in Algorithm \ref{alg:power_gnn}). Next, we describe how to exploit the rich representations produced by \texttt{PowerEmbed} by using the inception neural network architecture (Algorithm \ref{alg:sign}). Our framework is summarized in Figure \ref{fig:power_overview}.

\begin{figure}[htb!]
  \centering
    \includegraphics[width=0.75\textwidth]{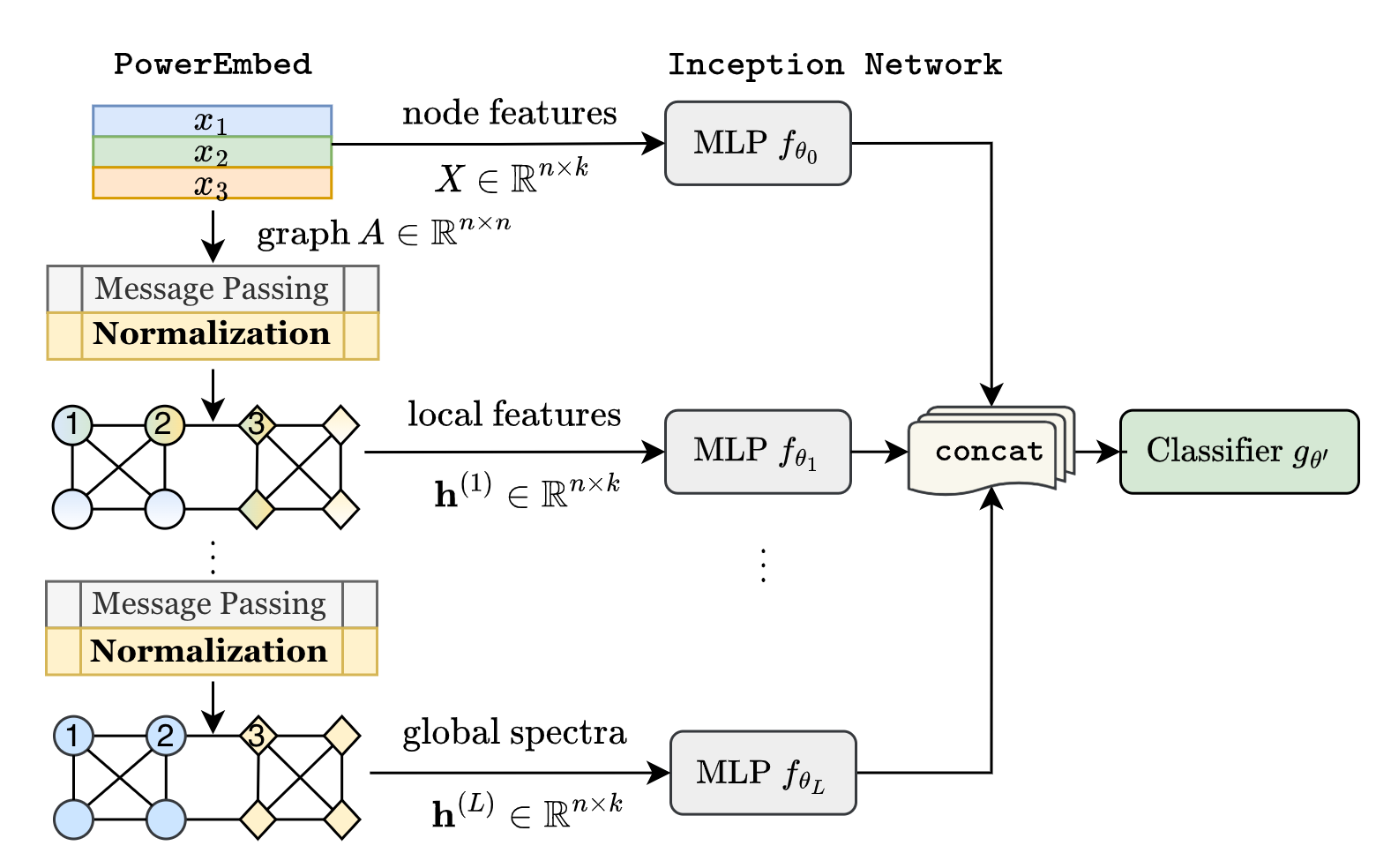}
  \caption{\texttt{PowerEmbed} produces a list of embeddings ranging from local neighbor-averaged features to global spectral information, which is then jointly learned by using an inception neural network.} %Overview of our graph learning framework with application on node classification:
  \label{fig:power_overview}
\end{figure}

\begin{algorithm}[htb!]
\caption{PowerEmbed}\label{alg:power_gnn}
\begin{algorithmic}
\Require a graph operator $S \in \R^{n \times n}$, node features $X \in \R^{n \times k}$, a list $P = [X]$.

\Initialize{$U(t)=X$}
\For{t = 0 to L-1}
\State{$\tilde{U}(t+1) = S \, U(t)$ [message-passing]} 
\State{$U(t+1) = \tilde{U}(t+1)  [ \tilde{U}(t+1) ^{\top}  \tilde{U}(t+1)  ]^{-1}$ [Normalization]} 
%\State{$U_{t+1} = \frac{U_{t+1}}{\| U_{t+1}[:,k]\|}$ [Rescaling]} 
\State{Append $ \frac{U(t+1)}{\| U(t+1)[:,k] \|}$ to $P$}
\EndFor \\
\Return $P$ %$h = \texttt{concat}($P$) \in \R^{n \times kL}$
\end{algorithmic}
\end{algorithm}

Algorithm \ref{alg:power_gnn} is motivated from the natural power iteration \cite{hua1999new}, which provably returns the top-$k$ eigenvectors of a symmetric matrix $S$ (up to an orthogonal transformation) upon convergence, with mild assumptions that the input features $X$ has full column rank and the $S$ has an eigen-gap, i.e., $\lambda_k \ne \lambda_{k+1}$ where $\lambda_k$ denotes the $k$-th eigenvalue of $S$ (ranked by magnitude). Crucially, the normalization step ensures orthogonality of the output at each iteration and only requires inverting a $k\times k$ matrix (which may be further improved using variants of QR factorization). The convergence rate of the $k$-th eigenvector depends on $\frac{\lambda_{k+1}}{\lambda_k}$, which is fast provided with a large eigen-gap $|\lambda_{k}| \gg |\lambda_{k+1}|$. In practice, if the original node features live in high-dimensional space (e.g., more features than the number of nodes), we first perform a dimensionality reduction to obtain the best rank-$k$ approximation of $X$ before running Algorithm \ref{alg:power_gnn} to speed up convergence and reduce memory complexity. %slow if the $\lambda_{k+1} \approx \lambda_k$.

Unlike existing work that requires pre-computed eigenvectors as input \cite{kreuzer2021rethinking, Dwivedi2022learnablePE, lim2022sign}, our approach implicitly computes eigenvectors using power iteration to improve efficiency while maintaining expressivity. By embedding the original graph to a list of Euclidean features, \texttt{PowerEmbed} allows fast training and inference with computational complexity independent of the graph topology. \texttt{PowerEmbed} extracts both \textit{local} features from the first few iterations and \textit{global} information from the last few iterations (i.e., the top-$k$ eigenvectors).
%of $S$ reveal the community structure, though in some settings one may need to go deeper into the spectrum and choose a large $k$ \cite{cheng2020spectral}
 These rich features are jointly learned using the inception network (Algorithm \ref{alg:sign}). Therefore, the presence of local features (including the input features) can avoid over-squashing in Definition \ref{defn:over-squashing}; the ability to express top-$k$ eigenvectors of a graph operator alleviates over-smoothing in Definition \ref{defn:over-smoothing} and makes it agnostic to the graph topology. Such expressivity is necessary to avoid over-smoothing/over-squashing considered in this work, but may not be sufficient depending on the data and the optimization procedure.

%\color{blue}
%MAYBE REMARKS ON COMPUTATIONAL EFFICIENCY @SOLEDAD?
%PowerEmbed produces unsupervised graph embedding, which can then be used in any standard Euclidean ML algorithms without having to store/access the graph for inference task; On the contrary, semi-supervised methods require storing the graph for training, which can be more memory-expensive and tricky to perform mini-batch training.
%\color{black}

\begin{algorithm}[htb!]
\caption{Node Classification with Inception Networks on \texttt{PowerEmbed} Representations} \label{alg:sign}
\begin{algorithmic}
\Require Training features $P_{\Omega}$ \eqref{eqn:ptrain}, training labels $Y_{\Omega}$, loss function $\ell$, step size $\eta$, hidden dimension $k'$, number of classes $K$,  epochs $T$.
\Initialize{A list of Multilayer Perceptions $\texttt{MLPs} = [f_{\theta_0}, \ldots, f_{\theta_{L}}]$ where $f_{\theta_i}: \R^k \to \R^{k'}$, a classifier $g_{\theta'}: \R^{(L+1)k'} \to \R^K$.} 
\For{t = 0 to T}
% \State{Initialize an empty list $H$.}
% \For{i = 0 to L}
% \State{Append $f_{\theta_i}(P_{\Omega}[i])$ to $H$} 
% \EndFor 
\State{$H = \Big[ f_{\theta_0}\left(P_{\Omega}[0] \right), \ldots,  f_{\theta_L}\left(P_{\Omega}[L] \right) \Big]$}
\State{$H' = \texttt{concat}(H) \in \R^{m \times (L+1)k'}$} 
\State{$\hat{Y} = g_{\theta'}(H')$}
\State{$L = \ell(\hat{Y}, Y_{\Omega})$}
\State{$\theta = \theta - \eta \nabla_{\theta}L$ for all $\theta \in \{\theta_0, \ldots, \theta_{L}, \theta' \}$} 
\EndFor \\
\Return $g_{\theta'} \circ \texttt{MLPs}$
\end{algorithmic}
\end{algorithm}
% The intermediate embeddings are normalized to have column norm $1$, which ensures numeric stability when training the neural network. 

%\subsection{Power Iteration}
 %Let $\mathcal{T}_u(X)$ be the upper-triangularization operator on the matrix $X$, which sets all elements of $X$ below the diagonal to $0$.

%\subsection{PowerGNN}

%normalization step is global (i.e. the normalization requires pooling information across all nodes in the graph). 

% Given \ref{alg:power_gnn}, we can change the message-passing step to compare with other common MPNNs:
% \begin{itemize}
%     \item Replace $A$ with the random walk matrix $D^{-1} A$ (c.f. SIGN)
%     \item Replace $A$ with its higher-order versions $A^d, d>1$.
%     \item Replace $A$ with graph Laplacian
%     \item Replace $\tilde{U}_{t+1} = A U_t$ with  $\tilde{U}_{t+1} = \sigma ( A U_t W_t)$ with learnable matrices $W_t$ (c.f. GCN).
% \end{itemize}

% Finally, we can use the last iteration of Algorithm \ref{alg:power_gnn} to be the graph embedding, i.e. $\textbf{h} = U_L$.

The following propositions, proven in Appendix \ref{app.powerembed} are basic facts of graph theory. Proposition \ref{prop.limit} applies the convergence results from \cite{hua1999new} to the graph operators used in Algorithm \ref{alg:power_gnn}, and it shows that \texttt{PowerEmbed} can express the top-$k$ eigenvectors of the common graph operators. Proposition \ref{prop.equivariance} shows that our graph representation learning approach is permutation equivariant \cite{maron2018invariant}.
\begin{prop}
\label{prop.limit}
Consider a graph $G$ with graph operator $S \in \{A, \bar{A}, A_{rw}\}$ and feature matrix $X \in \R^{n \times k}$. If $X$ is full (column) rank and the $k$-th and $(k+1)$-th eigenvalues of $S$ are distinct, then the last iterate $U(t+1)$ from \texttt{PowerEmbed} converges to the top $k$ eigenvectors of $S$ when $t\to \infty$ (up to an orthogonal transformation in $O(k)$). 
\end{prop}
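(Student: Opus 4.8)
The plan is to identify \texttt{PowerEmbed} with the natural power (subspace) iteration of \cite{hua1999new} and then verify its hypotheses for each graph operator $S \in \{A, \bar A, A_{rw}\}$. The first thing I would observe is that the normalization step does not change column spans: for any $M \in \R^{n \times k}$ of full column rank, $M^\top M$ is an invertible $k\times k$ matrix, so $M(M^\top M)^{-1}$ --- and likewise the per-column rescaling $M/\|M[:,k]\|$ appended to $P$ --- is $M$ right-multiplied by an invertible matrix, hence spans $\operatorname{col}(M)$. By induction, as long as the iterates keep full column rank, $\operatorname{col}(U(t)) = \operatorname{col}(S^t X)$; so \texttt{PowerEmbed} is exactly orthogonal/subspace iteration driven by $S$ from the initialization $X$, and the appended last iterate is a representative of that same subspace.

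Next I would invoke the convergence theorem in the symmetric case. Write $S = V \Lambda V^\top$ with eigenvalues ordered $|\lambda_1| \ge \cdots \ge |\lambda_n|$ and let $V_1$ collect the first $k$ eigenvectors. \cite{hua1999new} shows that when $X$ has full column rank --- so that, in general position, the compression $V_1^\top X$ is nonsingular --- and the eigen-gap $|\lambda_k| > |\lambda_{k+1}|$ holds, the iterates converge, geometrically at rate $|\lambda_{k+1}/\lambda_k|$, to an orthonormal basis of $\operatorname{col}(V_1)$; that is, to the top-$k$ eigenvectors of $S$ up to an element of $O(k)$ (which degenerates to a per-column sign ambiguity when the top $k$ eigenvalues have distinct magnitudes, and absorbs rotations inside any degenerate eigenspace otherwise). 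Both $A$ and $\bar A = \tilde D^{-1/2}\tilde A \tilde D^{-1/2}$ are symmetric, and the eigen-gap is precisely the stated hypothesis that the $k$-th and $(k+1)$-th eigenvalues (ranked by magnitude) differ, so the cases $S \in \{A, \bar A\}$ follow directly.

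For $A_{rw} = \tilde D^{-1}\tilde A$, which is not symmetric, I would use the similarity $A_{rw} = \tilde D^{-1/2}\bar A\,\tilde D^{1/2}$. Then $A_{rw}^t X = \tilde D^{-1/2}\bar A^t(\tilde D^{1/2}X)$, hence $\operatorname{col}(U(t)) = \tilde D^{-1/2}\operatorname{col}\!\big(\bar A^t(\tilde D^{1/2}X)\big)$. Since $\tilde D^{1/2}$ is invertible, $\tilde D^{1/2}X$ is full column rank iff $X$ is, and $A_{rw}$ shares its spectrum with $\bar A$ so the eigen-gap transfers verbatim. Applying the symmetric result to $\bar A$ with initialization $\tilde D^{1/2}X$ and mapping back through $\tilde D^{-1/2}$ shows $\operatorname{col}(U(t)) \to \tilde D^{-1/2}\operatorname{col}(V_1(\bar A))$, which is exactly the dominant $k$-dimensional right eigenspace of $A_{rw}$; equivalently $U(t)$ converges to the top-$k$ (right) eigenvectors of $A_{rw}$ up to $O(k)$.

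The main obstacle is not the algebra but the non-degeneracy condition behind the convergence theorem: full column rank of $X$ is necessary but on its own does not guarantee that $V_1^\top X$ is nonsingular (e.g.\ $\operatorname{col}(X)$ could meet $\operatorname{span}\{v_{k+1},\dots,v_n\}$), nor that $S^t X$ never loses rank along the way. Both conditions hold for $X$ in general position --- a full-measure set --- and in particular almost surely under the Gaussian node-feature model of Definition \ref{den:csbm}; this is the intended reading, following \cite{hua1999new}. A secondary point to be careful about is that Algorithm \ref{alg:power_gnn} uses the ``natural'' normalization $\tilde U(\tilde U^\top\tilde U)^{-1}$ rather than an explicit QR orthogonalization, so one should cite the convergence statement of \cite{hua1999new} as such; by the column-span identity above the two schemes track the same subspaces and therefore reach the same limit.
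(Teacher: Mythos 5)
Your proposal is correct and rests on the same foundation as the paper's proof --- the convergence of the natural power iteration from \cite{hua1999new} --- but the two arguments are organized differently. The paper reproduces the internal computation: it writes the update as $U(t+1) = S\,U(t)\,(U(t)^\top S^2 U(t))^{-1}$, splits coordinates into $\Phi(t) = V_1^\top U(t)$ and $\Omega(t) = V_2^\top U(t)$, derives $\Omega(t) = \Lambda_2^t L \Lambda_1^{-t}\Phi(t) \to 0$, and concludes $U(t) \to V_1 \Lambda_1^{-1/2} W$. You instead black-box that theorem and spend your effort on the reductions: the observation that the normalization is right-multiplication by an invertible matrix, so $\operatorname{col}(U(t)) = \operatorname{col}(S^t X)$, and the similarity $A_{rw} = \tilde D^{-1/2}\bar A\,\tilde D^{1/2}$ to transfer the symmetric result to the random-walk Laplacian. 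Your treatment of $A_{rw}$ is actually more careful than the paper's, whose block-diagonalization uses $[V_1\,V_2]^\top$ as the inverse of $[V_1\,V_2]$ and hence only literally applies to the symmetric operators; the paper handles $A_{rw}$ with a one-line remark about the eigenvector correspondence. You also correctly flag the genericity issue that the paper elides: the paper silently passes from ``$U(0)$ is full rank'' to ``$\Phi(0) = V_1^\top U(0)$ is full rank,'' which is a strictly stronger condition, and your general-position reading is the honest way to state the hypothesis. One small caution: with the normalization $\tilde U(\tilde U^\top \tilde U)^{-1}$ the limit is $V_1\Lambda_1^{-1/2}W$, whose columns span the top eigenspace but are not orthonormal, so ``converges to an orthonormal basis'' slightly overstates what the un-rescaled iterate achieves --- the paper's own statement has the same looseness, and the per-column rescaling appended to $P$ is what restores unit norms.
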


\begin{prop}\label{prop.equivariance}
The method \texttt{PowerEmbed} (Algorithm \ref{alg:power_gnn}) followed by the inception neural network $g_{\theta'} \circ \texttt{MLPs}$ (from Algorithm \ref{alg:sign}) is permutation equivariant.
\end{prop}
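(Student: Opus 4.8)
The plan is to prove permutation equivariance compositionally: first establish that each individual operation in \texttt{PowerEmbed} commutes with a relabeling of the nodes, then that the inception network of Algorithm \ref{alg:sign} acts nodewise and hence is also equivariant, and finally conclude by composition. Throughout, fix a permutation matrix $\Pi \in \R^{n \times n}$ and recall that relabeling sends the graph operator $S \mapsto \Pi S \tr{\Pi}$ and the feature matrix $X \mapsto \Pi X$. I want to show that running Algorithm \ref{alg:power_gnn} on $(\Pi S \tr{\Pi}, \Pi X)$ produces the list $[\Pi X, \Pi \mathbf{h}^{(1)}, \ldots, \Pi \mathbf{h}^{(L)}]$, i.e.\ each entry of the output list is the $\Pi$-image of the corresponding entry in the unpermuted run.

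The core is an induction on the iteration counter $t$, with the hypothesis that the iterate $U(t)$ on the permuted input equals $\Pi\, U(t)$ where $U(t)$ is the iterate on the original input. The base case is $U(0) = X \mapsto \Pi X$. For the inductive step I check the three substeps. The message-passing step: $(\Pi S \tr{\Pi})(\Pi U(t)) = \Pi S (\tr{\Pi}\Pi) U(t) = \Pi S U(t) = \Pi \tilde U(t+1)$, using $\tr{\Pi}\Pi = I$. The normalization step: the Gram matrix transforms as $(\Pi \tilde U)^{\top}(\Pi \tilde U) = \tilde U^{\top} \tr{\Pi} \Pi \tilde U = \tilde U^{\top}\tilde U$, so it is \emph{invariant}; hence $U(t+1) \mapsto (\Pi \tilde U)(\tilde U^{\top}\tilde U)^{-1} = \Pi\, U(t+1)$. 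The append step: the column-normalization factor $\|U(t+1)[:,k]\|$ is a Euclidean norm of a column and is likewise unchanged under left-multiplication by $\Pi$, so the appended matrix is $\Pi$ times the original appended matrix. This closes the induction and shows \texttt{PowerEmbed} is permutation equivariant as a map to the list $P$.

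For Algorithm \ref{alg:sign}, the key observation is that each $f_{\theta_i}$ is a multilayer perceptron applied \emph{rowwise} (independently per node), so $f_{\theta_i}(\Pi P[i]) = \Pi f_{\theta_i}(P[i])$; concatenation along the feature axis commutes with the row permutation $\Pi$; and the classifier $g_{\theta'}$ is again applied rowwise, so $g_{\theta'}(\Pi H') = \Pi g_{\theta'}(H')$. Composing, the prediction map $g_{\theta'} \circ \texttt{MLPs}$ satisfies $\hat Y \mapsto \Pi \hat Y$ when the input list is permuted by $\Pi$. Chaining this with the equivariance of \texttt{PowerEmbed} gives the claim. (One should note the inference-setup subtlety: in Algorithm \ref{alg:sign} only the rows indexed by $\Omega$ are fed in, but since the statement concerns the learned map $g_{\theta'}\circ\texttt{MLPs}$ evaluated on a full \texttt{PowerEmbed} list, this does not affect the argument; the training-time row subset is handled by the natural bookkeeping that a permutation of nodes permutes $\Omega$ accordingly.)

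I do not anticipate a genuine obstacle here — every step is a one-line matrix identity resting on $\tr{\Pi}\Pi = I$ and on operations being either rowwise or norm-/Gram-based and thus $\Pi$-invariant. The only points requiring a little care are: (i) making explicit that the matrix inverse in the normalization step is well-defined on both runs simultaneously (it is, because the Gram matrix is literally the same matrix), and (ii) being precise about what ``equivariant'' means for a map whose output is a \emph{list} of node-feature matrices rather than a single one, namely that $\Pi$ acts diagonally on the list. Neither is hard; the proof is essentially a careful unwinding of definitions.
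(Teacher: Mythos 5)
Your proof is correct and follows essentially the same route as the paper's: an iteration-by-iteration check that the message-passing step commutes with $\Pi$ while the Gram matrix (and hence the normalization) is permutation invariant, followed by the observation that the MLPs and classifier act rowwise and are therefore equivariant by weight sharing. Your version is somewhat more explicit (the induction, the column-normalization step, and the $\Omega$-indexing subtlety are spelled out), but there is no substantive difference in approach.
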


\section{Numerical Experiments}\label{sec:experiment}

\subsection{Baselines}

To evaluate the benefits of injecting global spectra information into MPNNs, We compare \texttt{PowerEmbed} with three classes of baselines.

\textit{Unnormalized counterparts.} Unsupervised MPNNs without layer-wise normalization using the same inception network architecture (Algorithm \ref{alg:sign}) : SIGN \cite{frasca2020SIGN} that uses $A_{rw}$, and a modified version of SGC \cite{wu2019simplifying} that uses $\bar{A}$ which we call ``SGC(Incep)''. These baselines illustrate the effects of adding the proposed layer-wise normalization in Algorithm \ref{alg:power_gnn}.

\textit{Spectral methods.} Standard spectral embedding methods described in \eqref{eqn:ASE} that use purely global (spectral) information. These global baselines measure the effects of adding local information (from neighbor-aggregated features) in \texttt{PowerEmbed}.

\textit{Semi-supervised MPNNs.} Popular models including graph convolutional networks (GCN) \cite{kipf2016semi} and graph attention networks (GAT) \cite{velivckovic2017graph}) for benchmark purpose; MPNNs that encode long-range information including geometric GCN (GEOM-GCN) \cite{Pei2020Geom-GCN}, generalized page-rank GNN (GPR-GNN) \cite{chien2020adaptive}, GCN via initial residual and identity mapping (GCNII) \cite{chen2020simple}, and Jumping Knowledge Networks with layer concatenation (JK-Concat) \cite{xu2018jump}; These baselines are chosen to gauge the effects of injecting \textit{spectral} information as opposed to long-range spatial information.

We remark that all unsupervised methods (i.e., \texttt{PowerEmbed}, unnormalized counterparts, spectral methods) produce an embedding matrix $\textbf{h} \in \R^{n \times k}$ (or a list of features $P$) without label supervision, followed by fitting a neural network classifier: For a single embedding matrix $\textbf{h}$, we fit a 2-layer MLP with ReLU activation; for the list $P = [X, \textbf{h}_1, \ldots, \textbf{h}_L]$, we fit the inception network (Algorithm \ref{alg:sign}). On the other hand, all semi-supervised MPNNs optimize the network end-to-end.

All the simulations and code are available in this repository \footnote{https://github.com/nhuang37/spectral-inspired-gnn}. %The experiments are conducted using Google Colaboratory with a single GPU Tesla P100.
% \begin{enumerate}
%     \item Unnormalized MPNNs counterparts: SIGN \cite{frasca2020SIGN} that uses the graph operator $A_{rw}$, and modified SGC that uses the inception network (Algorithm \ref{alg:sign}) with graph operator $\bar{A}$;
%     \item Unsupervised spectral methods described in eqn \eqref{eqn:ASE};
%     \item Semi-supervised MPNNs: Popular benchmarks including GCN \cite{kipf2016semi}, GAT \cite{velivckovic2017graph}; MPNNs that are proposed to capture global signals such as GEOM-GCN \cite{Pei2020Geom-GCN}, GCNII \cite{chen2020simple}, GPR-GNN \cite{chien2020adaptive}.
% \end{enumerate}

\subsection{Synthetic graphs} \label{subsec: sbm}
In this section, we simulate random graphs following the stochastic block models defined in \eqref{eqn:2B-SBM}, with node features $X$ sampled from a mixture of two Gaussians with mean $\mathbf{\mu}_0 = [1,1],  \mathbf{\mu}_1 = - \mathbf{\mu}_0$. %All the experiments are repeated over 30 random graphs.
We consider the (binary) node classification setting described in Section \ref{sec:prelim} with $n=500$ nodes and a 10/90 train/test split. We choose $k=2$ (i.e., use $X$ without dimensionality reduction).

Figure \ref{fig:convergence} illustrates the convergence of the last iterate of \texttt{PowerEmbed} to the top-$2$ eigenvectors of a 2B-SBM expected adjacency matrix, with different choices of the parameters $p,q$ that controls the block probability matrix $B$ defined in \eqref{eqn:2B-SBM}. The convergence is fast in  the dense case where $\lambda_i(A) \approx \lambda_i(P), i \in \{1,2\}$ and are well-separated from the remaining $n-2$ eigenvalues. In the sparse case, the eigen-gap $\lambda_2(A) - \lambda_3(A)$ is smaller due to higher (random) noise in the sparse random graph $A$, and thus the convergence of $\hat{u}_2$ is slower.

\begin{figure}[htb!]
  \centering
  \includegraphics[width=0.8\textwidth]{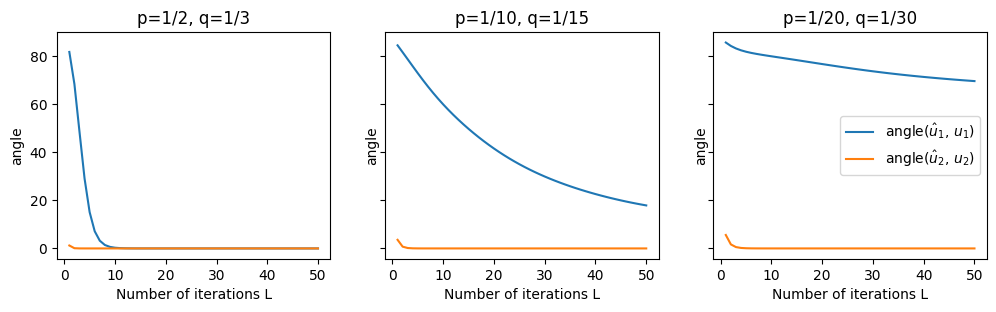}
  \caption{Convergence of the estimated eigenvectors $\hat{u}_i$ from Algorithm \ref{alg:power_gnn} is faster for denser graphs and slower for sparse graphs. We report the angles between the true top-$2$ eigenspaces and estimated top-$2$ eigenspaces, averaged over 30 random runs.}
  \label{fig:convergence}
\end{figure}

Figure \ref{fig:power-sbm-simulation} shows that \texttt{PowerEmbed} is able to recover the spectral embeddings in dense graphs (left column), and outperforms spectral embedding in sparse graphs (right column). This can be explained by \texttt{PowerEmbed}'s ability to leverage both local node features and global structural information. Moreover, \texttt{PowerEmbed} and spectral methods perform consistently well on graphs with homophily (top row) and heterophily (bottom row). This is in contrast to standard MPNNs that suffer from over-smoothing (e.g., ``GCN-10'') and fail in sparse heterophilous graphs (e.g., ``GCN-5'' performs much worse in heterophilous graphs in (d) compared to homophilous graph in (b) while ``Power(Lap)-10'' performs steadily well). Comprehensive simulation results including other baselines can be found in Appendix \ref{app.ablation_sbm}.

\begin{figure}[htb!]
  \centering
  \includegraphics[width=0.9\textwidth]{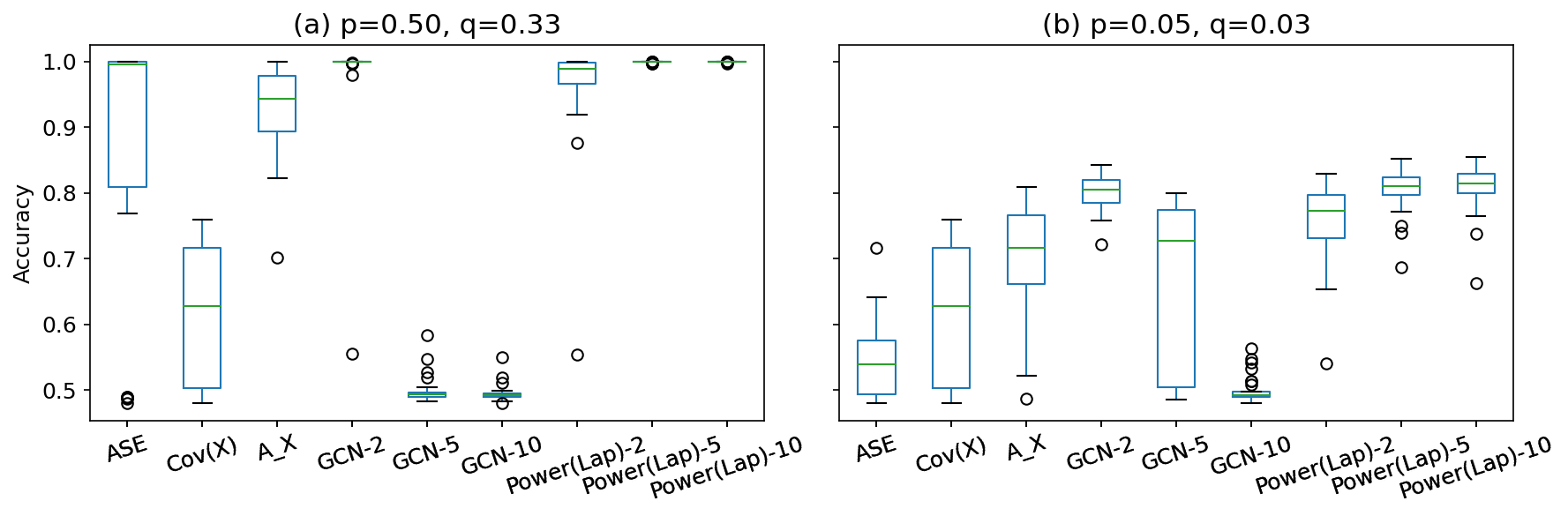}
   \includegraphics[width=0.9\textwidth]{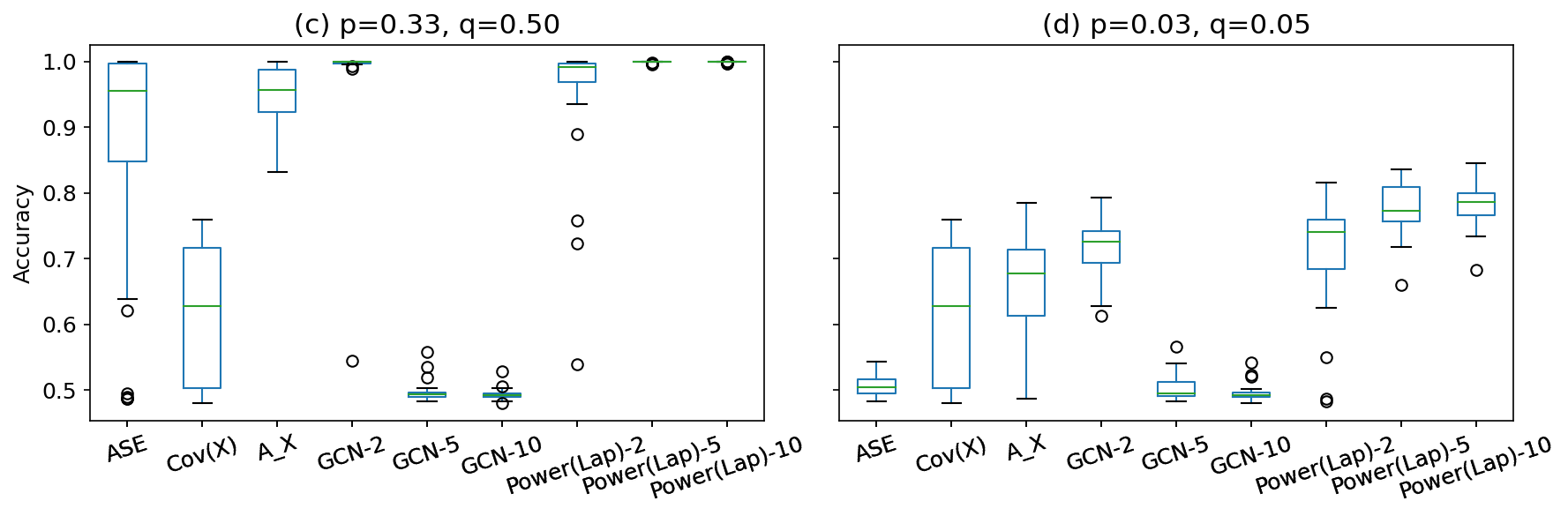}
  \caption{\texttt{PowerEmbed} enjoys the same performance guarantee as spectral embedding methods in dense graphs (left) with homophily (top-left) and heterophily (bottom-left); It outperforms spectral embedding methods in sparse graphs (right). ``ASE'', ``Cov(X)'', ``A\_X'' are global spectral methods defined in \eqref{eqn:ASE}. Standard MPNNs suffer from over-smoothing (e.g., ``GCN-10'') and perform much worse in sparse heterophilous graphs (e.g., ``GCN-5''). ``Method-$k$'' denotes the corresponding method with $k$ iterations. Boxplots are shown for each method, with the mean indicated by green bars and outliers displayed as circles.}
  \label{fig:power-sbm-simulation}
\end{figure}

\subsection{Real-world graphs}

We report node classification results of \texttt{PowerEmbed} on 10 graph benchmark datasets, including 5 heterophily graphs and 5 homophily graphs with varying densities. We summarize key statistics of each graph in Table \ref{tab:experiments}, Table \ref{tab:experiments-sparse} and describe their details in Appendix \ref{app.datasets}. 

For most graphs except Computers and Photo, we use the same data splits (48/32/20 for train/validation/test) from \cite{Pei2020Geom-GCN} released in Pytorch Geometric \cite{Fey2019pytorchgeo}; for Computers and Photo that were not studied in \cite{Pei2020Geom-GCN}, we use the same data split (60/20/20) as in \cite{luo2022inferring}. Note that most of the node features are high-dimensional bag-of-words vectors. Thus, for unsupervised embedding methods and JK-Concat, we use the top-$k$ eigenvector of the feature covariance as the input feature, where $k=10$ for small-size graphs including Wisconsin, Texas, and Cornell, and $k=100$ elsewhere; for semi-supervised methods (GCN-II, GPR-GNN), we use the original node features following the original model architectures. Self-loops are used for all graphs. We train all models using ADAM optimizer \cite{kingma2014adam} and full-batch gradient descent with the same hyperparameters: 100 epochs, 0.01 learning rate, 0.5 dropout rate, and no weight decay. Further details can be found in Appendix \ref{app.baselines}.

%The following experiments compare \texttt{PowerEmbed} to other embedding methods for node classification. 

 %The graph is only used in computing the graph embedding (i.e., feature extraction), and not required for optimizing the  neural network classifier.

Table \ref{tab:experiments} summarizes the experimental results on \textit{heterophilous} graph benchmarks: \texttt{PowerEmbed} outperforms other baselines on graphs with heterophily, particularly on dense heterophilous graphs. Figure \ref{fig:normalize_outperform} visualize the effect of increasing the number of message-passing layers in Squirrel and Chameleon, two dense graphs with heterophily: \texttt{PowerEmbed} exhibits consistent performance in deeper models and avoids over-smoothing; The ability to express global spectra information via simple normalization significantly boosts performance over unnormalized MPNN counterparts in heterophilous graphs. Across all 5 heterophilous graphs, we observe that even simple spectral method A\_X concatenating leading eigenvectors of graph spectra and node covariance achieves competitive performance, drastically outperforming unnormalized MPNNs and semi-supervised MPNNs. This highlights the utility of learning graph representation from first principle. Further ablation studies are shown in Appendix \ref{app.ablation_real}.

%learn concat instead of aggregate by conv?

\begin{table*}
\scriptsize
\caption{Heterophily graph experiments: \texttt{PowerEmbed} and spectral methods (third block) significantly outperform unsupervised MPNN counterparts (second block) and semi-supervised MPNNs (last block) in dense heterophilous graphs; \texttt{PowerEmbed} works better than spectral methods on sparse heterophilous graphs. We report the mean accuracy $\pm$ stderr over $10$ data splits. ``Method-$k$'' indicates the corresponding method with $k$ iterations. The $*$ results are obtained from \cite{Pei2020Geom-GCN}.} % for Squirrel and Chameleon; and from \cite{gasteiger_diffusion_2019} for Computers and Photo.
\label{tab:experiments}
\centering
\begin{tabular}{cccccc}
\hline \hline
Graph	&	Squirrel	&	Chameleon	&	  Wisconsin 	&	 Texas 	&	 Cornell 	\\
Density	&	38.16	&	13.8	&	1.86	&	1.61	&	1.53	\\
Homophily	&	0.22	&	0.23	&	0.21	&	0.11	&	0.3	\\
\#Nodes	&	5,201	&	2,277	&	251	&	183	&	183	\\
\#Edges	&	198,493	&	31,421	&	466	&	295	&	280	\\
\#Features	&	2,089	&	2,325	&	1,703	&	1703	&	1,703	\\
\#Classes	&	5	&	5	&	5	&	5	&	5	\\
	\hline										
Power-10	&	  \textbf{53.53 $\pm$ 0.41}  	&	 \textbf{64.98 $\pm$ 0.55}  	&	74.71  $\pm$ 1.74	&	 73.51 $\pm$ 2.05 	&	75.14 $\pm$ 2.50	\\
Power(RW)-10	&	  44.58 $\pm$ 0.52	&	61.64 $\pm$ 0.43 	&	75.49  $\pm$ 1.71	&	75.68 $\pm$ 1.21	&	72.97 $\pm$ 1.58	\\
Power(Lap)-10	&	42.32 $\pm$ 0.37	&	62.17 $\pm$ 0.41	&	74.71 $\pm$ 1.74	&	74.05 $\pm$ 2.10	&	77.03 $\pm$ 1.54	\\
Power-2	&	52.13 $\pm$ 0.55	&	64.47 $\pm$ 0.76  	&	 75.29 $\pm$ 1.47	&	\textbf{79.19 $\pm$ 1.33}	&	 76.76 $\pm$ 1.63 	\\
Power(RW)-2	&	  45.92 $\pm$ 0.48	&	 59.67 $\pm$ 0.62	&	77.45 $\pm$ 0.89	&	76.22 $\pm$ 1.31	&	75.41 $\pm$ 1.85	\\
Power(Lap)-2	&	43.06 $\pm$ 0.56	&	60.00 $\pm$ 0.62	&	\textbf{78.43 $\pm$ 1.59}	&	77.03 $\pm$ 1.54	&	\textbf{78.30 $\pm$ 1.58}	\\
	\hline										
SGC(Incep)-10	&	  37.07 $\pm$ 0.55  	&	 55.11 $\pm$ 0.82  	&	75.29 $\pm$ 1.04	&	75.68 $\pm$ 1.95	&	75.68 $\pm$ 1.83	\\
SIGN-10	&	  38.47 $\pm$ 0.42  	&	   60.22 $\pm$ 0.72  	&	 75.29  $\pm$ 1.45	&	 73.51 $\pm$ 2.02	&	 75.68 $\pm$ 1.21	\\
SGC(Incep)-2	&	  35.33 $\pm$ 0.35   	&	  54.19 $\pm$ 0.65    	&	 77.45 $\pm$ 0.89	&	76.76 $\pm$ 1.34	&	76.22 $\pm$ 2.07	\\
SIGN-2	&	  40.97 $\pm$ 0.35  	&	   60.11 $\pm$ 0.97  	&	\textbf{78.43 $\pm$ 1.41}	&	 75.14 $\pm$ 2.02 	&	 76.76 $\pm$ 1.34 	\\
	\hline										
Cov(X)	&	33.12 $\pm$ 0.53	&	44.74 $\pm$ 1.00	&	75.69 $\pm$ 1.25	&	77.30 $\pm$ 1.12	&	77.03 $\pm$ 2.27	\\
ASE	&	41.46 $\pm$ 0.62	&	57.92 $\pm$ 0.77	&	 49.41 $\pm$ 2.09	&	 58.65 $\pm$ 1.79	&	 56.76 $\pm$ 0.66	\\
A\_X	&	49.11 $\pm$ 0.37	&	61.97 $\pm$ 0.76	&	77.84 $\pm$ 1.24	&	76.76 $\pm$ 1.22	&	75.95 $\pm$ 2.28	\\
	\hline										
GCN$*$	&	23.96	&	28.18	&	45.88	&	52.16	&	52.7	\\
GAT$*$	&	30.03	&	42.93	&	49.41	&	58.38	&	54.32	\\
Geom-GCN$*$	&	38.14	&	60.9	&	64.12	&	67.57	&	60.81	\\
GCNII-10 	&	 35.23 $\pm$ 0.50 	&	 49.96 $\pm$ 0.46 	&	 59.02 $\pm$ 1.60 	&	 61.08 $\pm$ 1.49 	&	 48.38 $\pm$ 1.64 	\\
GPR-GNN-10 	&	 34.51 $\pm$ 1.45 	&	 52.37 $\pm$ 3.43 	&	 59.41 $\pm$ 2.75 	&	 58.92 $\pm$ 2.98 	&	 52.97 $\pm$ 3.11 	\\
JK-Concat-10 & 40.48 $\pm$ 0.42 & 58.31 $\pm$ 0.38 & 72.75 $\pm$ 1.48 & 73.51 $\pm$ 1.90 & 67.57 $\pm$ 1.43 \\
\hline \hline
\end{tabular}
\end{table*}

\begin{figure}[htb!]
  \centering
  \includegraphics[width=0.95\textwidth]{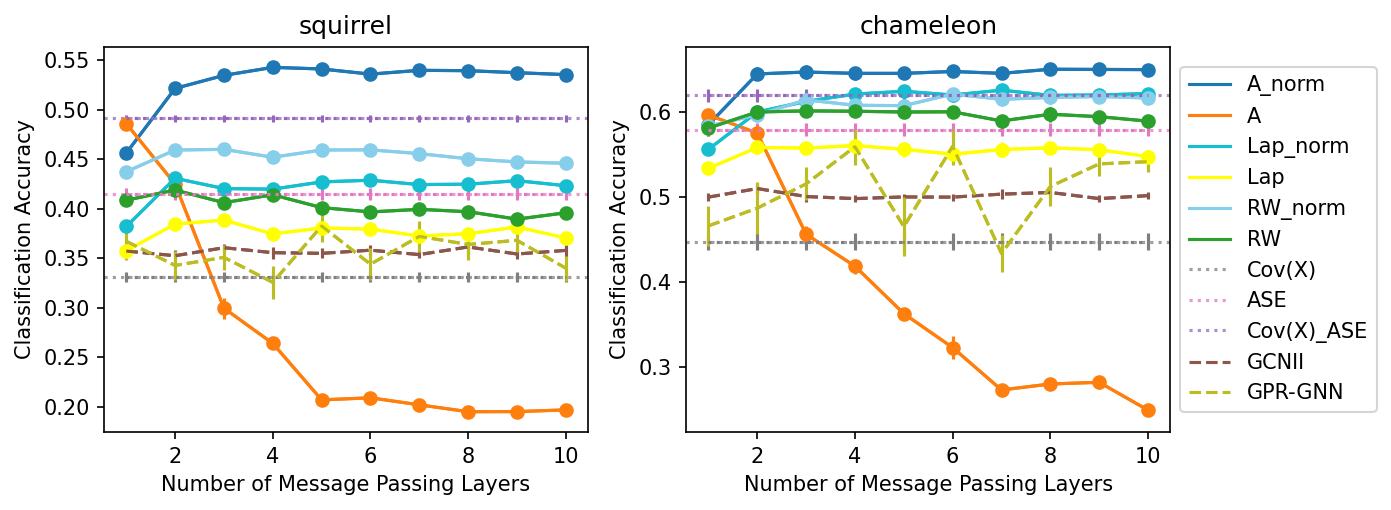}
  \caption{\texttt{PowerEmbed} (annotated with ``\_norm'') that adds the normalization step for orthogonality can expressive top-$k$ eigenvectors, which avoids over-smoothing and outperforms other baselines, particularly in heterophilous graphs. Baselines include unnormalized counterparts (SIGN denoted as ``RW'', SGC(Incep) denoted as ``Lap''); spectral methods defined in \eqref{eqn:ASE}, and semi-supervised MPNNs.}
  \label{fig:normalize_outperform}
\end{figure}

Table \ref{tab:experiments-sparse} summarizes the results on \textit{homophilous} graph benchmarks: \texttt{PowerEmbed} achieves competitive performance as other MPNN baselines. As discussed in Section \ref{sec:prelim}, MPNNs tend to work well in graphs with homophily due to the inductive bias of aggregating local neighbors' information, while spectral methods could perform sub-optimally, especially in sparse graphs. Together with Table \ref{tab:experiments}, \texttt{PowerEmbed} is shown to be agnostic to the graph topology and can flexibly combine local and global information.

\begin{table*}
\scriptsize
\caption{Homophilous graph experiments: \texttt{PowerEmbed} shows competitive performance as other MPNN baselines. We report the mean accuracy $\pm$ stderr over $10$ data splits. ``Method-k'' indicates the corresponding method with $k$ iterations. The $*$ results are obtained in \cite{Pei2020Geom-GCN}, from \cite{gasteiger_diffusion_2019} for Computers and Photo (``-'' denotes results not available).}
\label{tab:experiments-sparse}
\centering
\begin{tabular}{cccccc}
\hline \hline
Graph	&	Computers	&	Photo	&	Coauthor(CS)	&	 Cora 	&	 Citeseer   \\ 	
Density	&	35.76	&	31.13	&	8.93	&	1.95	&	1.41	\\
Homophily	&	0.8	&	0.85	&	0.83	&	0.81	&	0.74	\\
\#Nodes	&	13,752	&	7,650	&	18,333	&	2,708	&	3327	\\
\#Edges	&	491,722	&	238,162	&	163,788	&	5,278	&	4676	\\
\#Features	&	767	&	745	&	6,805	&	1,433	&	3703	\\
\#Classes	&	10	&	8	&	15	&	7	&	6	\\
	\hline										
Power-10	&	90.34 $\pm$ 0.22	&	93.84 $\pm$ 0.17	&	93.93 $\pm$ 0.11	&	81.69 $\pm$ 0.50	&	69.67 $\pm$ 0.63   \\	
Power(RW)-10	&	91.14 $\pm$ 0.19	&	94.16 $\pm$ 0.20	&	93.88 $\pm$ 0.13	&	85.03 $\pm$ 0.44	&	73.15 $\pm$ 0.54 \\	
Power(Lap)-10	&	91.20 $\pm$ 0.14	&	93.97 $\pm$ 0.19	&	94.26 $\pm$ 0.09	&	84.95 $\pm$ 0.40	&	72.61 $\pm$ 0.51	\\
Power-2	&	90.85 $\pm$ 0.15	&	94.04 $\pm$ 0.21	&	94.32 $\pm$ 0.11	&	81.23 $\pm$ 0.52	&	72.03 $\pm$ 0.41   \\	
Power(RW)-2	&	\textbf{91.43 $\pm$ 0.13}	&	94.56 $\pm$ 0.19	&	94.30 $\pm $0.08	&	83.56 $\pm$ 0.44	&	  72.62 $\pm$ 0.48 \\	
Power(Lap)-2	&	91.33 $\pm$ 0.15	&	94.58 $\pm$ 0.21	&	94.75 $\pm$ 0.09	&	83.52 $\pm$ 0.27	&	73.27 $\pm$ 0.75	\\
	\hline										
SGC(Incep)-10	&	90.61 $\pm$ 0.16	&	94.65 $\pm$ 0.18	&	94.44 $\pm$ 0.10	&	84.89 $\pm$ 0.71	&	  73.39 $\pm$ 0.62  \\	
SIGN-10	&	90.65 $\pm$ 0.18	&	94.63 $\pm$ 0.25	&	94.05 $\pm$ 0.13	&	85.45 $\pm$ 0.32	&	 72.78 $\pm$ 0.51  \\	
SGC(Incep)-2	&	90.97 $\pm$ 0.17	&	94.47 $\pm$ 0.22	&	94.54 $\pm$ 0.08	&	83.74 $\pm$ 0.53	&	  72.47 $\pm$ 0.61 \\	
SIGN-2	&	90.89 $\pm$ 0.20	&	\textbf{94.59 $\pm$ 0.19}	&	94.09 $\pm$ 0.12	&	83.92 $\pm$ 0.43	&	  73.27 $\pm$ 0.53 \\	
	\hline										
Cov(X)	&	82.42 $\pm$ 0.14	&	89.45 $\pm$ 0.26	&	91.70 $\pm$ 0.15	&	69.24 $\pm$ 0.56	&	 66.79 $\pm$ 0.62	\\
ASE	&	77.61 $\pm$ 0.20	&	85.84 $\pm$ 0.22	&	75.24 $\pm$ 0.21	&	72.84 $\pm$ 0.48	&	51.73 $\pm$ 1.67	\\
A\_X	&	89.97 $\pm$ 0.21	&	94.22 $\pm$ 0.22	&	93.69 $\pm$ 0.13	&	80.89 $\pm$ 0.56	&	69.84 $\pm$ 0.71	\\
	\hline										
GCN$*$	&	90.49	&	93.91	&	93.32	&	85.77	&	 73.68 \\	
GAT$*$	&	-	&	-	&	-	&	 \textbf{86.37} 	&	 74.32\\	
Geom-GCN$*$	&	-	&	-	&	-	&	84.93	&	 \textbf{75.14}\\	
GCNII-10 	&	 90.75 $\pm$ 0.16 	&	 93.86 $\pm$ 0.18 	&	 94.32 $\pm$ 0.26	&	 84.14 $\pm$ 0.47 	&	 72.17 $\pm$ 0.66 \\	
GPR-GNN-10 	&	 87.62 $\pm$ 0.85 	&	 93.52 $\pm$ 0.39 	&	 \textbf{94.81 $\pm$ 0.27}	&	 85.77 $\pm$ 0.67 	&	 73.22 $\pm$ 0.73 \\	
JK-Concat-10 & 90.56 $\pm$ 0.18 & 93.84 $\pm$ 0.29 & 93.80 $\pm$ 0.12 & 84.41 $\pm$ 0.41 & 71.45 $\pm$ 0.56\\
\hline \hline
\end{tabular}
\end{table*}

\section{Discussion} \label{sec: discussion}
In this work, we propose a simple technique \texttt{PowerEmbed} that exploits the advantages of global spectral methods and local message-passing algorithms for graph representation learning. Specifically, \texttt{PowerEmbed} performs a layer-wise normalization that allows MPNNs to express the top-$k$ eigenvectors of a graph and capture global spectral information. We demonstrate the advantages of our techniques theoretically and empirically on both synthetic and real-world graphs. 

Our spectral-inspired GNN opens the door to many interesting future directions. Firstly, we can extend \texttt{PowerEmbed} to semi-supervised graph learning tasks allowing learnable weights in the message-passing layers: While \texttt{PowerEmbed} can still express the top-$k$ eigenvectors of the graph operators, it may converge to a function of eigenvectors driven by label supervision. We conjecture that the semi-supervised versions may be helpful in certain sparse graphs, where the graph eigenvectors are suboptimal in estimating community structure and the label signals can improve the inference performance (e.g., partially-labeled SBM \cite{cai2016inference, cencheng2022gee}). Furthermore, since \texttt{PowerEmbed} combines global spectral information with local MPNNs, it is natural to consider more powerful versions of \texttt{PowerEmbed} that are based on higher-order MPNNs. Yet it remains open to fully understand the relations between graph spatial information (i.e., symmetries) and graph spectral information (e.g., eigenvalues and eigenvectors) \cite{furer2010power, rattan2022WL}. We hope that our approach will inspire future research on principled combinations of global and local methods for graph representation learning.

%Finally, on the flip side of injecting global information in local MPNNs, we may relax global spectral methods to encode local information, which can improve flexibility and scalability.

\begin{ack}
We thank Rene Vidal for motivating this work, and Youngser Park for his valuable comments to this paper. We also thank the anonymous reviewers for giving us constructive feedback.
SV is partially supported by ONR N00014-22-1-2126, NSF CISE 2212457, an AI2AI Amazon research award, and the NSF–Simons Research Collaboration on the Mathematical and Scientific Foundations of Deep Learning (MoDL) (NSF DMS 2031985).
\end{ack}

\bibliographystyle{unsrtnat}
\bibliography{ref}

\clearpage
%%%%%%%%%%%%%%%%%%%%%%%%%%%%%%%%%%%%%%%%%%%%%%%%%%%%%%%%%%%%

\appendix

\section{Proofs of \texttt{PowerEmbed} properties} 
\label{app.powerembed}

\begin{proof}[Proof of Proposition \ref{prop.limit}]

Recall $S \in \R^{n \times n}$ and  $U(t) \in \R^{n \times k}$ for all $t \ge 0$. Since $S \in \{A, \bar{A}, A_{rw}\}$, $S$ admits a spectral decomposition. This is obvious for the symmetric matrices $A, \bar{A}$. For non-symmetric $A_{rw}$, if $\lambda$ is an eigenvalue of $\bar{A}$ with eigenvector $w$, then $\lambda$ is also an eigenvalue of $A_{rw}$ with eigenvector $\tilde{D}^{-0.5} w$ \cite[Proposition.3.3]{von2007tutorial}. %; whereas the non-symmetric $A_{rw}$ can be written as $D^{-1} \left( D^{0.5} \bar{A} D^{0.5} \right)$ and thus share the same set of eigenvalues as $\bar{A}$ with corresponding eigenvectors scaled by $D^{0.5}$. 

Assume the initialization $U_0$ is full rank, and the $k$-th and $(k+1)$-th eigenvalue of $S$ are distinct. Following the argument from Appendix A in \cite{hua1999new} we observe
\begin{equation}
    U(t+1) = S U(t) \left( U(t)^\top S^2 U(t) \right)^{-1} \label{eqn:iter}
\end{equation}
where $U(t) \in \mathbb R^{n\times k}$. Consider the spectral decomposition of $S$:
\begin{equation}
    S = [V_1 \,\, V_2] \left[ \begin{matrix} \Lambda_1 & 0 \\
    0 & \Lambda_2
    \end{matrix} \right] [V_1 \,\, V_2]^\top, \label{eqn:eigendecomp}
\end{equation}
where $V_1 \in \mathbb R^{n \times k}$, $V_2 \in \mathbb R^{n \times (n-k)}$, and $\Lambda_1 =\operatorname{diag}(\lambda_1, \ldots, \lambda_{k})$, $\Lambda_2=\operatorname{diag}(\lambda_{k+1}, \ldots, \lambda_{n})$ and  %$|\lambda_i|>|\lambda_{i+1}|$ for $i=1,\ldots, n_1$. 

Let $\Phi(t) = V_1^\top U(t) \in \R^{k \times k}$  and $\Omega(t)= V_2^\top U(t) \in \R^{(n-k) \times k}$ for $t \ge 0$. Multiplying $[V_1 \, \, V_2]^{\top}$ on both sides of \eqref{eqn:iter} and using \eqref{eqn:eigendecomp}, we have
\begin{equation}
    \left[\begin{matrix}
    \Phi(t+1) \\
    \Omega(t+1)
    \end{matrix}
    \right]
    =  
    \left[ \begin{matrix}
    \Lambda_1 & 0 \\
    0 & \Lambda_2
    \end{matrix} \right] 
    \left[\begin{matrix}
    \Phi(t) \\
    \Omega(t)
    \end{matrix}
    \right]
    Z(t), \label{eqn:block}
\end{equation}
where $Z(t) = \{ \Phi(t)^\top \Lambda_1^2 \Phi(t) + \Omega(t)^\top \Lambda_2^2 \Omega(t) \}^{-1}$. 

Since $\Phi(0) = V_1^{\top} U(0)$ is full rank, there exists $L$ such that $\Omega(0) = L \Phi(0)$. A simple induction argument shows that 
\begin{equation}
    \Omega(t) = \Lambda_2^{t} L \Lambda_1^{-t}\Phi(t). \label{eqn:vanish}
\end{equation}

Using the assumption that $\lambda_{k} > \lambda_{k+1}$, \eqref{eqn:vanish} implies that $\lim_{t \to \infty} \Omega(t) \to 0$. Therefore $\lim_{t \to \infty} Z(t) = \{ \Phi(t)^\top \Lambda_1^2 \Phi(t) \}^{-1}$. Then \eqref{eqn:block} implies 
\begin{equation}
    \Phi(t+1) = \Lambda_1 \Phi(t) Z(t) = \Lambda_{1}^{-1} \left(\Phi(t)^{\top} \right)^{-1} .
\end{equation}
Thus, upon convergence at $t=L$, $\Phi(L) \Phi(L)^{\top} = \Lambda_1^{-1}$ which implies $U(L) =  V_1  \Lambda_1^{-0.5}  W $ where $W$ is a unitary matrix. This proves that Algorithm \ref{alg:power_gnn} returns the top-$k$ eigenvectors of $S$ up to orthogonal transformation in $O(k)$.
\end{proof}

\begin{proof}[Proof of Proposition \ref{prop.equivariance}]
Algorithm \ref{alg:power_gnn} is permutation equivariant. Namely, for all $\Pi\in S_n$ permutation matrix and all $A\in \mathbb R^{n\times n}$ and all $X \in \mathbb R^{n\times k}$ we have that 
\begin{equation}
    \texttt{PowerEmbed}(\Pi \,A \,\Pi^\top, \Pi\, X) =  \Pi \,\texttt{PowerEmbed}(A, X).
\end{equation}
It suffices to observe that 
$\Pi A \Pi^\top \Pi X = \Pi A \, X$ and  $(\Pi A \, X)^\top (\Pi A \, X) = (A\, X)^\top (A\, X)$, making $\tilde{U}(t+1) ^{\top} \tilde{U}(t+1)$ permutation invariant, and therefore
$[\mathcal{T}_u(\tilde{U}(t+1) ^{\top} \tilde{U}(t+1) )]^{-1}$ is also permutation invariant.
Therefore at each step $\tilde U(t+1) (\Pi\, A\,\Pi^\top, \Pi\,X) = \Pi \tilde U(t+1)(A, X)$, making \texttt{PowerEmbed} permutation equivariant. 

The inception networks $g_{\theta'} \circ \texttt{MLPs}$ are permutation equivariant because the same function is applied to all the nodes. It implements the equivariance as weight sharing (similar to message-passing methods). 
\end{proof}

\section{Experiment details}\label{app.exp_details}

\subsection{Baselines} \label{app.baselines}

We perform \texttt{PowerEmbed} with different graph operators: $A$ (indicated by ``Power''), $A_{rw}$ (indicated by ``Power(RW)'') and $\bar{A}$ (indicated by ``Power(Lap)'').

For semi-supervised MPNNs including GCNII, GPR-GNN, we use hyperparameters that are are comparable with \texttt{PowerEmbed}: epochs~$=100$, learning rate~$=0.01$, dropout rate~$=0.5$, weight decay~$=0$, number of layers~$=10$. Specifically for GCNII, we use $\alpha=0.1$.

We remark that JK-Concat \cite{xu2018jump} can be viewed as the semi-supervised version of SIGN \cite{frasca2020SIGN} that uses learnable weights and nonlinearity in the message-passing layers. Thus, JK-Concat could produce layer-wise embeddings more adaptive to the inference task by learning end-to-end. Nonetheless, similar to other semi-supervised MPNNs, JK-Concat requires storing the graph at training and inference time, whereas unsupervised methods like \texttt{PowerEmbed} and SIGN first embed the graph as Euclidean features, which allows fast training and inference with computational complexity independent of the graph topology \cite{frasca2020SIGN}. Due to its similarity to \texttt{PowerEmbed} and SIGN, we adopt the same input feature dimensionality reduction strategy (see details in Appendix \ref{app.choice}) and $10$-layer concatenation for better comparison.
% \textcolor{blue}{In particular, we use JK-Net with layer concatenation (JK-Concat), which }

\subsection{Datasets}\label{app.datasets}

To evaluate \texttt{PowerEmbed}, we consider 5 dense graphs and 5 sparse graphs with different sizes and homophily ratios (see Tables \ref{tab:experiments}, \ref{tab:experiments-sparse} for summary statistics). 

\textit{Webpage networks. } In these networks, nodes represent webpages and edges are hyperlinks between them. Squirrel and Chameleon are created based on specific topics from Wikipedia \cite{rozemberczki2021multi}, where node features are informative nouns in the Wikipedia pages and the node labels are created in \cite{Pei2020Geom-GCN} which reflect the average monthly traffic of the webpage.  Wisconsin, Texas and Cornell are introduced in \cite{Pei2020Geom-GCN}, where node features are bag-of-words representation of the webpages, and node labels are the webpage category. All webpage networks we considered exhibit heterophily structure, where node with different class labels tend to be connected.

\textit{Co-purchase networks. } Computers and Photo are Amazon co-purchase networks introduced in \cite{shchur2018pitfalls}, where nodes represent goods, edges represent tendency of being bought together, node features are bag-of-words vector of product reviews, and node labels are product category.

\textit{Citation networks. } Cora and Citeseer are standard citation network benchmarks \cite{sen2008collective} where nodes represent documents, edges represent citation links, node features are bag-of-words vector from the dictionary, and node labels are the document category.

\textit{Co-authorship network. } Coauthor(CS) is introduced in \cite{shchur2018pitfalls} where nodes represent researchers, edges represent coauthorship, node features are bag-of-words vector from the paper keyword dictionary, and node labels are the research fields.

\subsection{The Choice of the Embedding dimension $k$} \label{app.choice}

\texttt{PowerEmbed}, similar to spectral embedding methods, crucially depends on the choice of embedding dimension $k$: choosing $k$ too small introduces large bias while choosing $k$ too large increases variance. We refer the interested reader to \cite{athreya2017statistical, gu2021principled} for in-depth discussions. 

In our experiments on real graphs, we choose $k=10$ for small graphs including Wisconsin, Texas, and Cornell; $k=100$ for other larger graphs. The same embedding dimension is used across all unsupervised methods (i.e., \texttt{PowerEmbed}, unnormalized counterparts, spectral methods) and JK-Concat that take the top-$k$ eigenvectors of the feature covariance matrix as input. The embedding dimension is chosen based on balancing the bias and variance as well as reducing memory complexity. On the other hand, we follow the original experiment design in semi-supervised MPNNs including GCNII and GPR-GNN, which use original node features and set $k$ the same as the feature dimension. We remark that the optimal choice of $k$ depends on each dataset and each embedding method, which is beyond the scope of current work but an interesting future research direction.

\section{Ablation study}\label{app.ablation}

\subsection{Synthetic graphs}
\label{app.ablation_sbm}

We report comprehensive simulation results on 2B-SBM models. Graph Laplacian operator $\bar{A}$ is used in all message-passing methods including \texttt{PowerEmbed}, SGC, GCN, GCNII, GPR-GNN. All experiments are done using $n=500$ nodes and repeated for 30 random runs. Figure \ref{fig:sbm_ablation} supplements Figure \ref{fig:power-sbm-simulation} with other baselines, and Figure \ref{fig:sbm_vary_density_ablation} shows a more fine-grained picture as the graph density decreases. We make the following observations.

\textit{Performance in sparse graphs.} In sparse graphs (e.g., Fig \ref{fig:sbm_ablation} (b), Fig \ref{fig:sbm_vary_density_ablation}), spectral methods and the last iterate of \texttt{PowerEmbed} perform poorly whereas \texttt{PowerEmbed} using all intermediate representations (including initial node features) performs well. Shallow MPNNs achieve strong performance (e.g., `SGC-2'', ``GCN-2''), including heterophilous graphs (Fig \ref{fig:sbm_ablation} (B)): as analyzed in \cite{ma2022is}, when nodes from the same class exhibit similar associativity pattern with other classes (as in the case of SBM graphs), heterophilous graphs are not necessarily hard to learn for shallows MPNNs. 

\textit{Over-smoothing.} Fig \ref{fig:sbm_vary_density_ablation} shows that Standard MPNNs  (e.g., `SGC-10'', ``GCN-10'') suffer from over-smoothing: increasing the number of layers degrades the performance. Moreover, the denser the graph, the worse of the performance degradation (e.g., `SGC-5'', ``GCN-5' perform better in sparse graphs than dense graphs), a phenomenon discussed in \cite{oono2019graph, rong2020dropedge}. On the other hand, the last iterate of \texttt{PowerEmbed} (right-most columns in Fig \ref{fig:sbm_ablation}) prevents over-smoothing in dense graphs. MPNNs that are specifically designed to avoid over-smoothing (e.g., GCNII, GPR-GNN) perform more stable as the number of layers increases, but suffer from high variance issues. \texttt{PowerEmbed} also prevents over-smoothing while achieving smaller variance. This illustrates the benefits of expressing global \textit{spectral} information over long-range spatial information under the SBM models, which we also empirically observe in some real-world graphs (see Table \ref{tab:experiments}, Table \ref{tab:experiments-sparse}).

\begin{figure}[h!]
  \centering
 \includegraphics[width=0.9\textwidth]{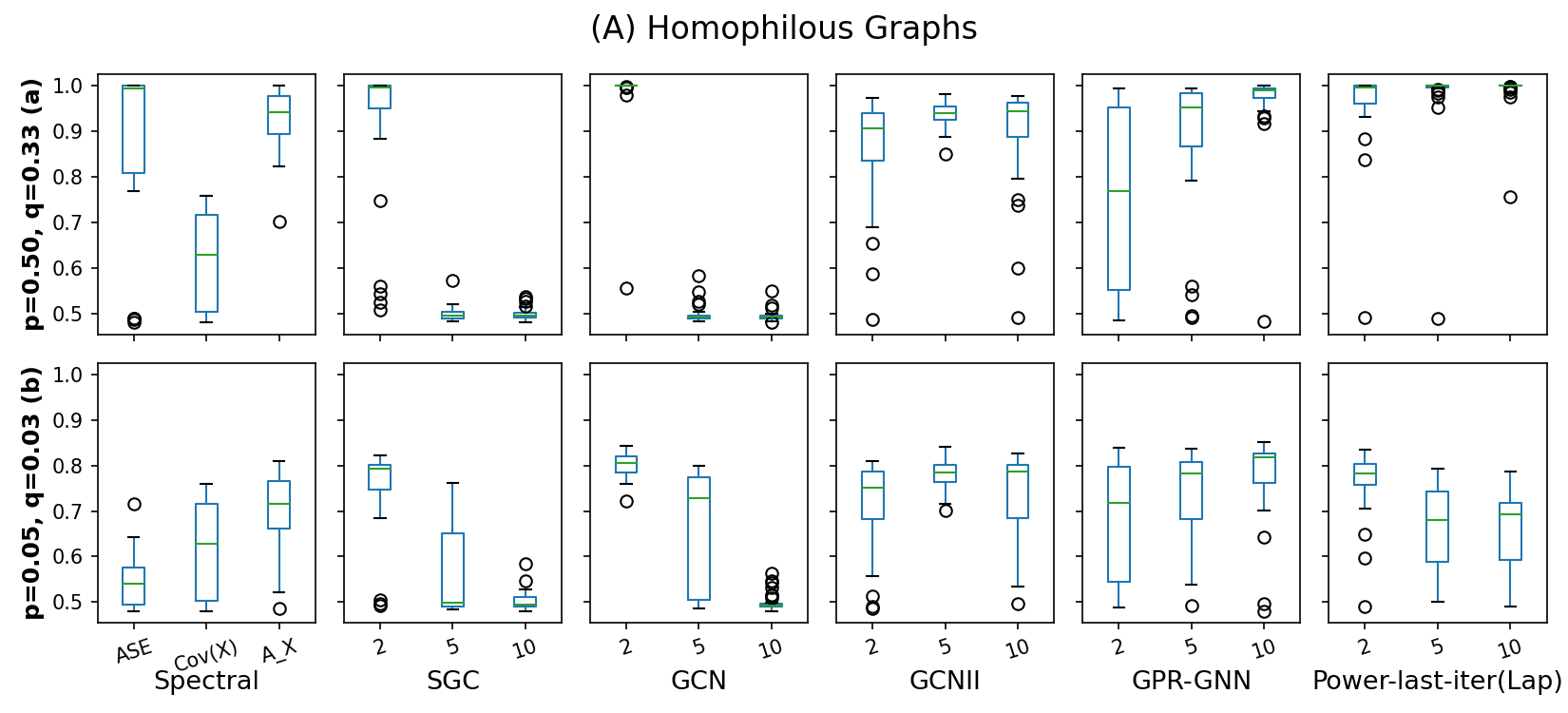}
 \includegraphics[width=0.9\textwidth]{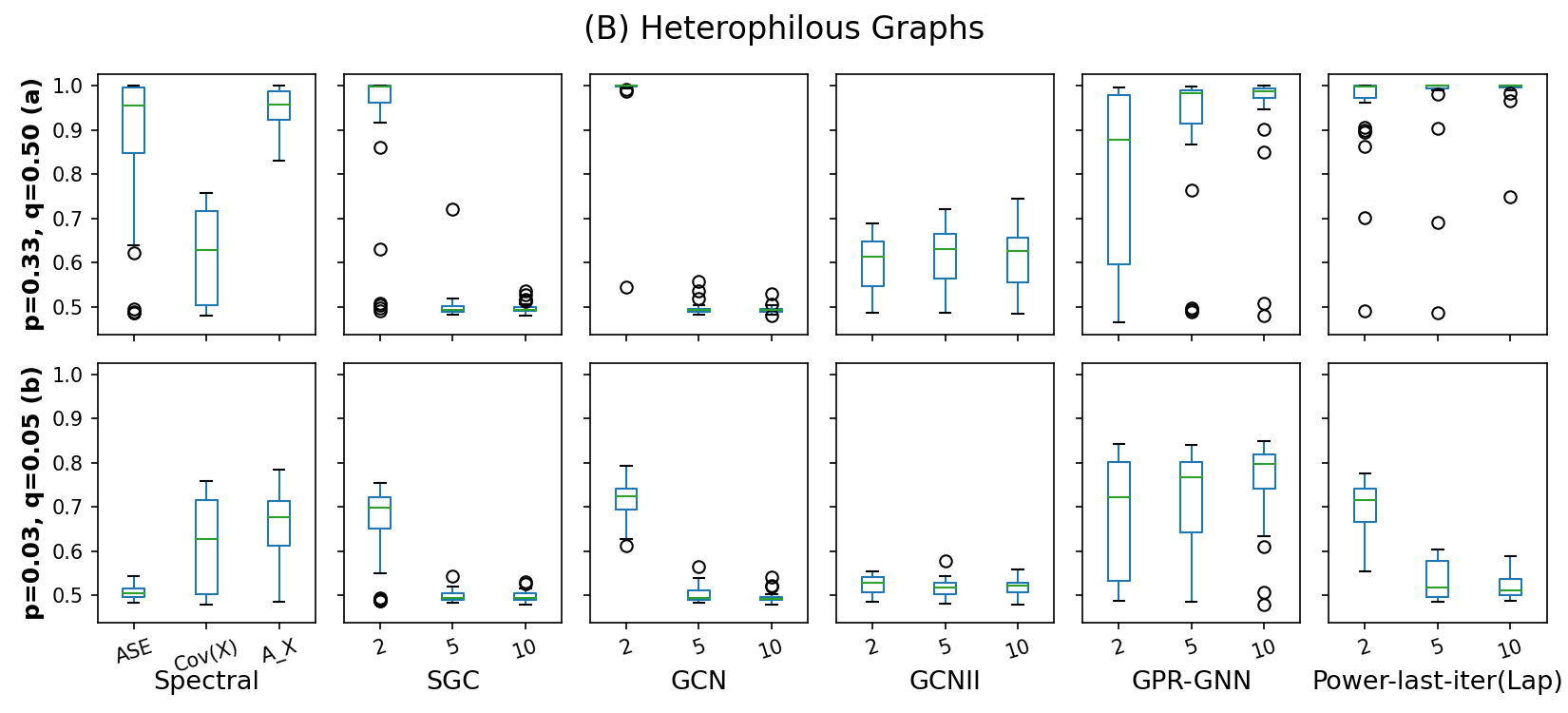}
  \caption{Supplementary to Figure \ref{fig:power-sbm-simulation} on other baselines: the last iterate of \texttt{PowerEmbed} perform well in dense graphs (both homophily and heterophily), but fail in sparse graphs, similar to spectral embedding (``ASE''). Standard MPNNs suffer from over-smoothing (``SGC-10'', ``GCN-10''). MPNNs encode long-range information can go deeper (``GCNII'', ``GPR-GNN''), albeit with higher variance.}
  \label{fig:sbm_ablation}
\end{figure}

\begin{figure}[h!]
  \centering
 \includegraphics[width=0.9\textwidth]{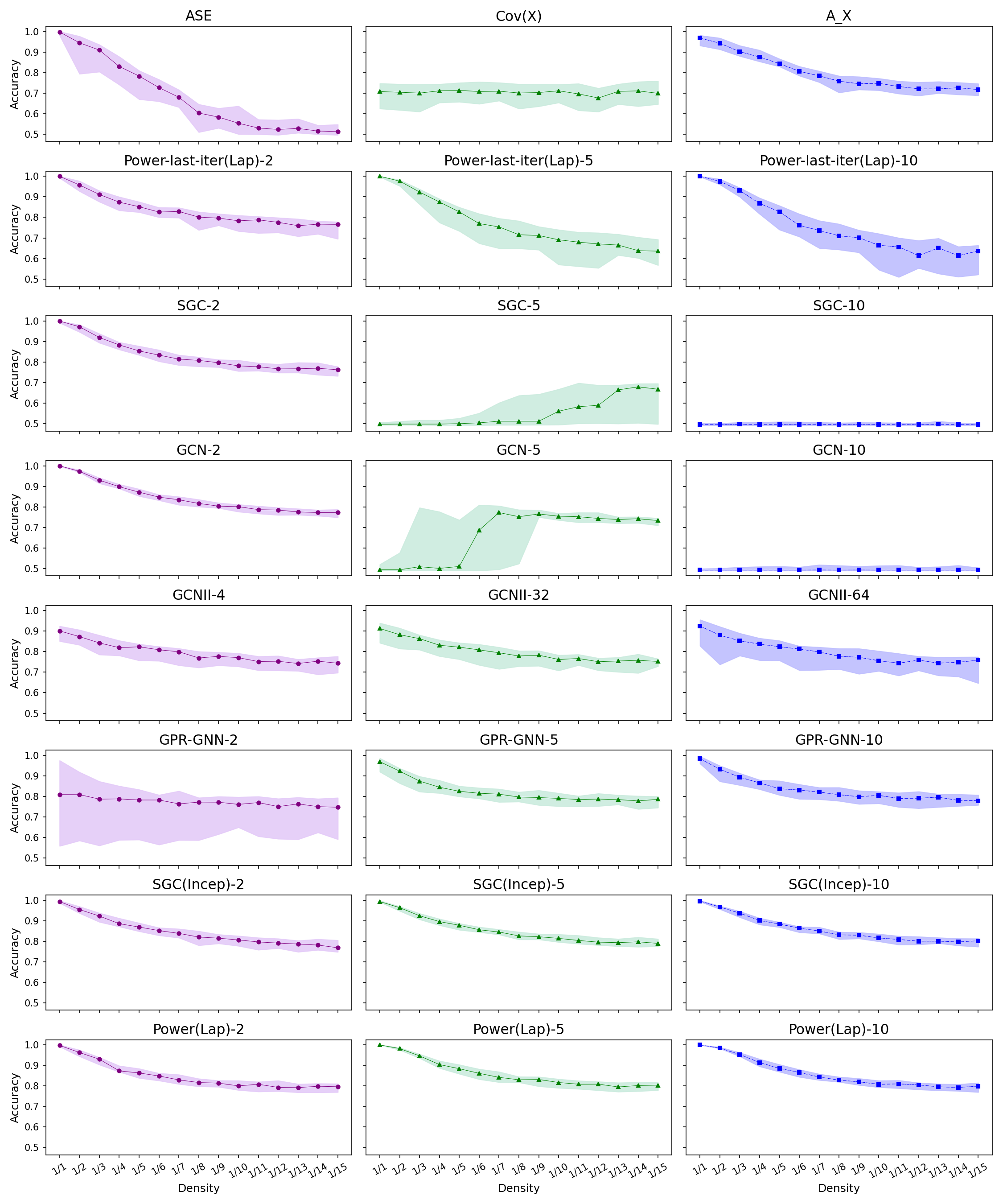}
  \caption{Performance w.r.t graph density changes in 2B-SBM model: $p=1/2 \times \text{X-axis},~q=1/3 \times \text{X-axis}$ where $p,q$ defines the block connection probability matrix $B$ in eqn~\eqref{eqn:2B-SBM}. ``Method-$k$'' denotes the corresponding method with $k$ number of layers. As density decreases from left to right, the performance of spectral embedding ``ASE'' and the last iterate of \texttt{PowerEmbed} (second row) degrade significantly, while shallow MPNNs degrade more gracefully (e.g.,``SGC-2'', ``GCN-2''). On the other hand, deep MPNNs (e.g.,``SGC-10'', ``GCN-10'') completely fail due to oversmoothing, while deep (last iterate of) \texttt{PowerEmbed} are more resilient. Finally, \texttt{PowerEmbed} and SGC(Incep) (bottom two rows) that use a list of intermediate embeddings perform consistently well in sparse graphs, robust to the choice of number of layers. } %\texttt{PowerEmbed} avoids oversmoothing (e.g., ``Power-last-iter(Lap)\_10'' compared to its unnormalized counterparts (e.g.,``SGC\_10'', ``GCN\_10'').  }
  \label{fig:sbm_vary_density_ablation}
\end{figure}

\clearpage
\subsection{Real-world graphs}
\label{app.ablation_real}
We compare the performance of \texttt{PowerEmbed} on real benchmarks using (1) all iterations; (2) last iteration; (3) the input feature and the last iteration. Figure \ref{fig:dense_ablation} shows that on dense graphs, even without layer-wise concatenation, \texttt{PowerEmbed} outperforms its unnormalized counterparts when the number of layers increases. This empirically demonstrates its ability to avoid over-smoothing.

\begin{figure}[h!]
  \centering
 \includegraphics[width=0.9\textwidth]{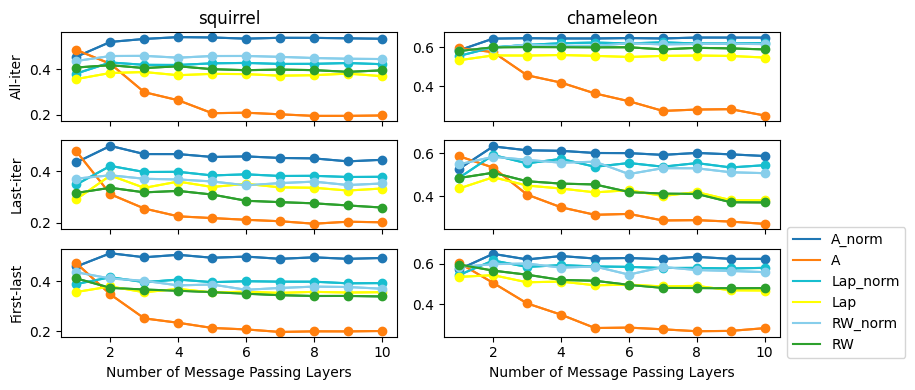}
 \includegraphics[width=0.9\textwidth]{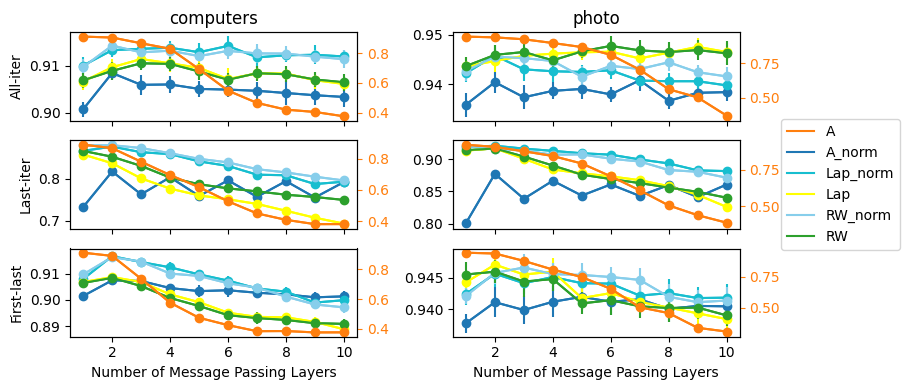} 
  \caption{In dense graphs, \texttt{PowerEmbed} (blue lines, annotated with ``\_norm'') outperforms its message-passing counterparts; We compare different graph operators $A$, $\bar{A}$ (``Lap''), and $A_{rw}$ (``RW''). We also study the effects of the number of iterations. Bottom row: The secondary Y-axis with orange labels correspond to performance of $A$.}
  \label{fig:dense_ablation}
\end{figure}

\end{document}